\documentclass{article}

\usepackage{arxiv}

\input{preamble.sty}

\usepackage{siunitx}
\usepackage{natbib}
\usepackage{todonotes}
\usepackage{pgfplots}
\usepackage{layouts}

% ------------------------------------------------------------
% ------------------------------------------------------------

% Ule's latex default settings
% simply include per 'input' in the current latex file 

% ------------------------------------------------------------
% ------------------------------------------------------------

% ------------------------------------------------------------
% packages:
 \usepackage{graphicx}
% ------------------------------------------------------------

% mathematical stuff: 
\usepackage{amssymb}
\usepackage{amsmath}
\allowdisplaybreaks[3] % page breaks inside eqnarray, see ams doku  
%\usepackage{bbold} % used in definition of \charfct 
%\usepackage[mathcal]{euscript} % curlier version of mathcal symbols

% for figures: 

\usepackage{flafter} % no figure before the place it is defined 
\PassOptionsToPackage{pdftex}{graphicx} %flafter loads graphicx, but
                                %without option. will lead to option
                                %clash with graphicx package unle                                %this line is put in ... 

% for smileys and lightnings: 
\usepackage{wasysym} 

\def\ba#1\ea{\begin{align*}#1\end{align*}} %\ba = \begin{algin*}, \ea = \end{align*}
\def\banum#1\eanum{\begin{align}#1\end{align}} %\banum = \begin{algin}, \eanum

\newcommand{\bi}{\begin{itemize}}
\newcommand{\ei}{\end{itemize}}
\newcommand{\be}{\begin{enumerate}}
\newcommand{\ee}{\end{enumerate}}
\newcommand{\bc}{\begin{center}}
\newcommand{\ec}{\end{center}}

% a list environment with no extra space anywhere (for slides, etc,
% when one has to save space)
\newcommand{\biq}{%bi-quetsch
\begin{list}{$\bullet$}{%
\setlength{\topsep}{0cm}
\setlength{\partopsep}{-\parskip}
\setlength{\leftmargin}{0.8cm}
 \setlength{\itemsep}{0pt}
    \setlength{\parskip}{0pt}
    \setlength{\parsep}{0pt}    
}}
\newcommand{\eiq}{\end{list}}

\def\bv#1\ev{\begin{verbatim}#1\end{verbatim}} %\ba = \begin{algin*}, \ea = \end{align*}

% redefine the labelitemii (default is a dash, this looks like a
% minus). only for non-beamer-documents, in beamer this gives an
% error:
\makeatletter
\@ifclassloaded{beamer}{}{}
\makeatother

% THEOREM ENVIRONMENTS: 

% now distinguish whether llncs is used or not. if yes, then the
% following commands are already defined, so do not define them again
% (otherwise error message): 
% might be better to look for amsthm, where those things are
% defined. this is sometimes also used in other packages, like beamer

%define theorem environments:
\ifx\lemma\undefined
\typeout{ULE: LEMMA IS UNDEF, THUS DEF IT:}
%define theorem and proof environments:

\newtheorem{theorem}{Theorem} 
\newtheorem{lemma}[theorem]{Lemma} 
\newtheorem{proposition}[theorem]{Proposition}

\newtheorem{corollary}[theorem]{Corollary}
\newtheorem{definition}[theorem]{Definition}

\else
\typeout{ULE: LEMMA IS ALREADY DEFINED }
\fi

% hack for a theorem without number (nn = no number): 
% I define a theorem with a new counter theoremnn. Then I redefine the 
% command which writes the number of this counter to write nothing. 

% hack for a proposition with a given number: 

% to apply this, set the dummycounter to the value you want, eg 
% \setcounter{dummycounter}{\ref{prop-eps-graph}}
% \addtocounter{dummycounter}{-1} %because will be increased when calling propcounter
% Then call the propcounter. 

% comments in calculations: 
%\newcommand{\comment}[1]{\hspace{1cm}(\text{#1})}

% PROOF ENVIRONMENTS: 
\newenvironment{proofof}[1]{\par\noindent{\em Proof of #1. }}{\hfill\\[2mm]}
\newcommand{\boringqed}{\hfill$\Box$} % unfilled box
  % filled black box 

\ifx\proof\undefined
\typeout{ULE: PROOF IS UNDEF, THUS DEF IT:}
\newenvironment{proof}{\par\noindent{\em Proof. }}{\boringqed\vspace*{2mm}\\}
\else
\typeout{ULE: PROOF IS ALREADY DEFINED. }
\fi

% ------------------------------------------------------------
% colors
% ------------------------------------------------------------

%\input{ules_latex_colors.tex}

\ifx\ulesred\undefined
\typeout{DEFINING COLORS}
\usepackage{color}

% defining colors: 
\definecolor{ulesred}{rgb}{0.65,0.18,0.21} % Uni Tuebingen Rot
%\definecolor{ulesred}{rgb}{0.7,0,0} % das hatte ich, bevor ich nach Tue kam

% previously I had used [0.7,0,0]

\newcommand{\ulesred}{\color{ulesred}}

\definecolor{darkgreen}{rgb}{0,0.5,0.2}

\definecolor{verylightgray}{gray}{0.85} 
 % to be used on paper 
\definecolor{mediumgray}{gray}{0.5} 
\newcommand{\mediumgray}{\color{mediumgray}} % to be used on slides
\definecolor{darkgray}{gray}{0.4} 
 % to be used on slides

 % to be used on paper 
\newcommand{\slidegray}{\mediumgray} % to be used on slides

\definecolor{lavender}{cmyk}{0,0.5,0,0}

\definecolor{darkblue}{rgb}{0,0,0.5}

\definecolor{orange}{rgb}{1,0.5,0}

\else

\fi

% defining shortcuts to standard colors:  
\ifx\black\undefined
\newcommand{\black}{\color{black}}
\fi

\ifx\blue\undefined
\newcommand{\blue}{\color{blue}}
\fi

\ifx\red\undefined
\newcommand{\red}{\color{red}}
\fi

\ifx\green\undefined
\newcommand{\green}{\color{green}}
\fi

\ifx\white\undefined
\newcommand{\white}{\color{white}}
\fi

\ifx\magenta\undefined
\newcommand{\magenta}{\color{magenta}}
\fi

% ------------------------------------------------------------
% dealing with pseudocode: 
% ------------------------------------------------------------
\usepackage[noend]{algorithmic}

\algsetup{linenosize=\scriptsize\slidegray}
\algsetup{linenodelimiter=}

 % don't want then statements
 % don't want do statements
\newcommand{\balg}{\begin{algorithmic}[1]}
\newcommand{\ealg}{\end{algorithmic}}

% ALGORITHM ENVIRONMENT BY FOOT: 

% Usage: \ulesalgorithm{Name of alg}{contents of alg}, see NIPS 2007
% paper...
% nice version by foot is to use: 
% \begin{quote}{ \tt
% ...
% }
% \end{quote}

% a similar thing: just moves text a bit to the left (similar to the
% quote environment, but no margin at the right side): 
% use it with \ulesquote{ ... } or perhaps \ulesquote{\texttt ... }
% see stability overview paper at ftml for an example

% ------------------------------------------------------------
% math letters: 
% ------------------------------------------------------------

\newcommand{\RR}{\mathbb{R}}
 
\newcommand{\Nat}{\mathbb{N}}

%%  low level versions:  
%%\newcommand{\R}{I\!\!R}
%%\newcommand{\RR}{I\!\!R}
%%\newcommand{\Nat}{I\!\!N} 
%%\newcommand{\CC}{I\!\!\!\!C}

% for compatibility reasons with other people: 
\let\R\undefined %sometimes it is defined as something I don't know
\newcommand{\R}{\RR}

% note that to generate more curly mathcal symbols i invoked the eucal
% package in the beginning

\usepackage{multirow}
% ------------------------------------------------------------
% math operators: 
%if they should have limits (such as a sum) use the * version
% ------------------------------------------------------------
% standard stuff: 

 % * falls subscript

%\DeclareMathOperator{\aufspan}{span} replace this by linspan ...
\DeclareMathOperator{\linspan}{span}
\DeclareMathOperator{\image}{Im}

% trace
\DeclareMathOperator{\tr}{Tr}% trace

 % support of a function

 %range of a funciton
 %volume
\DeclareMathOperator{\rank}{rank} %rank of a matrix
\DeclareMathOperator{\relu}{relu}
\DeclareMathOperator{\linear}{linear}
\DeclareMathOperator{\GCN}{GCN}

\DeclareMathOperator{\lin}{lin}
\DeclareMathOperator{\algn}{algn}
\DeclareMathOperator{\transpose}{\textsf{T}}

\newcommand{\cora}{\texttt{Cora} }
\newcommand{\cseer}{\texttt{Citeseer} }
\newcommand{\pubmed}{\texttt{Pubmed}}

%more special stuff: 

%\DeclareMathOperator{\remp}{R_{emp}}

% Norms: Attention, changed. 
% now use them with standard command with curly brackets: 
% \norm{something}

% min and max

% knn stuff: 

% Graph Laplacians: 

%

% Graph cuts and objective functions: 

% Landau notation: 

% the little o is a hack, it is a recaled version of the big0, but we
% need to say how it is supposed to look like in the differnt
% displaystyles: 

 %  \mathchoice
%   {% \displaystyle
%     \operatorname{\scriptstyle\mathcal{O}}
%   }
%   {% \textstyle
%       \operatorname{\scriptstyle\mathcal{O}}
%   }
%   {% \scriptstyle
%       \operatorname{\scriptscriptstyle\mathcal{O}}
%   }
%   {% \scriptscriptstyle
%       \operatorname{\scalebox{0.8}{$\scriptscriptstyle\mathcal{O}$}}
%   }
% }

% complexity classes: 

% ------------------------------------------------------------
% misc math stuff: 
% ------------------------------------------------------------

% various symbols: 

\renewcommand{\epsilon}{\ensuremath{\varepsilon}}
\renewcommand{\phi}{\ensuremath{\varphi}}
\wasyfamily
\DeclareFontShape{U}{wasy}{b}{n}{ <-10> ssub * wasy/m/n
<10> <10.95> <12> <14.4> <17.28> <20.74> <24.88>wasyb10 }{}

% to produce arbitrary large smileys with wasysym: 
% add the font sizes i want in the end (here: 40, 50, and 60). Then
% can be called with {\fontsize{40}{1} \smiley}: 
\DeclareFontShape{U}{wasy}{m}{n}{ <5> <6> <7> <8> <9> gen * wasy
<10> <10.95> <12> <14.4> <17.28> <20.74> <24.88> <35> <40> <50> <60> wasy10  }{}

% date command (in case one wants to have it consistent over several
% documents, eg in application documents} 

% to force a date to be in german: 
%\usepackage[american]{babel} 
%{\selectlanguage{german}\today}

% ------------------------------------------------------------
% propositions, theorems on slides: 
% ------------------------------------------------------------

% proposition environments etc: 
\newcounter{ulestheorem}

% ulesblock is defined in beamerthemeule

% Usage: \ulestheorem{label}{title}{body}
\newcommand{\ulestheorem}[3]{%
\refstepcounter{ulestheorem}%
\begin{ulesblock}{%
Theorem \arabic{ulestheorem}
\label{#1}%
\ifthenelse{\equal{#2}{}}{}{(#2)}}%title of the theorem, if exists
#3 % body of the theorem
\end{ulesblock}%
}

% Usage: \ulestheorem{label}{title}{body}
\newcommand{\ulesproposition}[3]{%
\refstepcounter{ulestheorem}%
\begin{ulesblock}{%
Proposition \arabic{ulestheorem}
\label{#1}%
\ifthenelse{\equal{#2}{}}{}{(#2)}}%title of the theorem, if exists
#3 % body of the theorem
\end{ulesblock}%
}
% Usage: \ulestheorem{label}{title}{body}
\newcommand{\ulescorollary}[3]{%
\refstepcounter{ulestheorem}%
\begin{ulesblock}{%
Corollary \arabic{ulestheorem}
\label{#1}%
\ifthenelse{\equal{#2}{}}{}{(#2)}}%title of the theorem, if exists
#3 % body of the theorem
\end{ulesblock}%
}

\usepackage[utf8]{inputenc} % allow utf-8 input
\usepackage[T1]{fontenc}    % use 8-bit T1 fonts
\usepackage{hyperref}       % hyperlinks
\usepackage{url}            % simple URL typesetting
\usepackage{booktabs}       % professional-quality tables
\usepackage{amsfonts}       % blackboard math symbols
\usepackage{nicefrac}       % compact symbols for 1/2, etc.
\usepackage{microtype}      % microtypography
\usepackage{xcolor}         % colors
\usepackage{tikz}
\usepackage{import}

\bibliographystyle{abbrvnat}

\title{Relating graph auto-encoders to linear models}

\author{%
\begin{minipage}{0.49\textwidth}
\centering
Solveig Klepper \\
\texttt{solveig.klepper@uni-tuebingen.de}
\end{minipage}\hfill\begin{minipage}{0.49\textwidth}
\centering
Ulrike von Luxburg \\
\texttt{ulrike.luxburg@uni-tuebingen.de}
\end{minipage} \\\\
\begin{minipage}{\textwidth}
\centering
Department of Computer Science\\
University of Tübingen
\end{minipage}}
\date{}

\setlength{\parindent}{0pt}

\begin{document}
\maketitle
\begin{abstract}
Graph auto-encoders are widely used to construct graph representations in Euclidean vector spaces. However, it has already been pointed out empirically that linear models on many tasks can outperform graph auto-encoders. 
In our work, we prove that the solution space induced by graph auto-encoders is a subset of the solution space of a linear map. This demonstrates that linear embedding models have at least the representational power of graph auto-encoders based on graph convolutional networks. So why are we still using nonlinear graph auto-encoders? One reason could be that actively restricting the linear solution space might introduce an inductive bias that helps improve learning and generalization. While many researchers believe that the nonlinearity of the encoder is the critical ingredient towards this end, we instead identify the node features of the graph as a more powerful inductive bias. We give theoretical insights by introducing a corresponding bias in a linear model and analyzing the change in the solution space. Our experiments are aligned with other empirical work on this question and show that the linear encoder can outperform the nonlinear encoder when using feature information.
\end{abstract}
\section{Introduction}
Many real-world data sets have a natural representation in terms of a graph that illustrates relationships or interactions between entities (nodes) within the data. Extracting and summarizing information from these graphs is the key problem in graph learning, and representation learning is an important pillar in this work. The goal is to represent/encode/embed a graph in a low-dimensional space to apply machine learning algorithms to solve a final task.  \\
In recent years neural networks and other approaches that automate learning have been employed as a powerful alternative to leverage the structure and properties of graphs \citep{Line, node2vec, struc2vec, Metapath2vec, InductiveRepresentationLearningLargeGraphs, neuralembeddingshyperbolicspace, DeepNNsforLearningGR, StructuralDeepNetworkEmbedding, NetworkEmbeddingsviaDeepArchitectures}. For example, 
\cite{graph_representation_learning} review key advancements in this area of research, including graph convolutional networks (GCNs).  
Graph auto-encoders (GAEs),  first introduced in \citet{VGAE}, are based on a graph convolutional network (GCN) architecture. They have been heavily used and further refined for representation learning during the past couple of years (see \citet{AdaGAE, AdvRegGAE, RWRGAE, hyperVGAE}, to name a few).
\begin{figure}[t]
    \centering
    \includegraphics[width=0.8\textwidth]{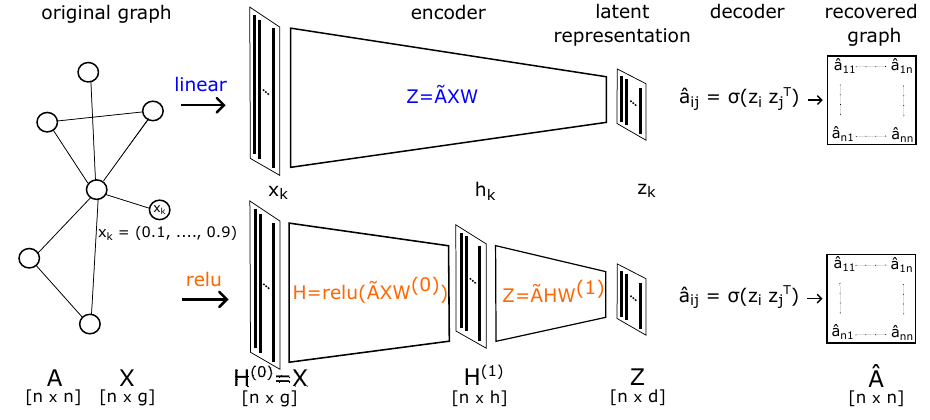}
    \caption{The architecture of a linear auto-encoder and a relu auto-encoder. The only difference is the encoder function: The linear encoder is a simple linear map, whereas the relu encoder is a two-layer GCN network with relu activation in the first layer and linear output activation. In both cases, the input is the graph $(A, X)$. $H^{(1)}$ is the hidden layer of the GCN. A row in the matrix $Z$ gives the latent representation of the respective row/node in the input matrix $X$. In both cases the decoder is the inner product, and the target is to recover the adjacency matrix~$A$.}
    \label{fig:sketch}
\end{figure}
Recently, \citet{linearGAE} discovered that the original GAE architecture, which has been used as a basis for several enhancements, can be outperformed by simple linear models on an exhaustive set of tasks. 
The equivalence or even superiority of (almost) linear models has also been empirically shown for standard GCN architectures for link prediction and community detection \citep{simpleGCN}. The question arises as to why we still use and improve these models when simpler models with comparable performance exist. What is the theoretical relation between linear and nonlinear encoders, and does the empirical work provide a reason to replace graph convolutional networks with linear equivalents? In this paper, we contribute to understanding the underlying inductive bias of the graph auto-encoder model.\par \smallskip
{\bf Contributions:} (1) We draw a theoretical connection between linear and non-linear encoders and prove that, under mild assumptions, for any function $f(A, X)$ on a graph, there exists a linear encoder that can achieve the same training loss. We use this result to show that the representational power of relu encoders is, at most, as large as the one of linear encoders. (2) We investigate the bias that is relevant for good generalization and give theoretical insights into how features influence the solution space of a linear model. (3) We empirically find that the nonlinearity in the relu encoder is not the critical ingredient for generalization. Indeed we discover that if the graph features are taken into account appropriately, the linear model outperforms the relu model test data.
\section{Preliminaries}
We consider a connected, unweighted and undirected graph $G$ with adjacency matrix $A \in \{0, 1\}^{n \times n}$ and a feature matrix $X \in \mathbb{R}^{n \times g}$. The feature matrix $X$ contains $g$-dimensional feature vectors that describe additional properties of each node of the graph. For example, if the graph consists of a social network, the features could describe additional properties of the individual persons, such as age and income. If we do not have any feature information for the nodes (``featureless graph''), then we set $X$ as the $n \times n$ identity matrix. We always assume that all nodes are connected to themselves, that is, $a_{ii} = 1$ for all $i$. We call $D \in \mathbb{N}^{n\times n}$ the degree matrix with diagonal entries $d_{ii} = \sum_{j=1}^n A_{ij}$ and $0$ everywhere else, and $\tilde{A} = D^{-1/2} A D^{-1/2}$ the symmetrically normalized adjacency matrix.\par \smallskip
{\bf Graph convolutional networks.} Inspired by convolutional neural networks, graph convolutional networks (GCNs, \citet{GCN}) are one of the most widely used graph neural network architectures. The input consists of a graph $(A, X)$. The graph convolution is applied to a matrix $\tilde{A}$ that encodes the adjacency structure of the graph. This matrix $\tilde{A}$, called diffusion matrix, is the symmetrically normalized adjacency matrix in our case but could be replaced by any matrix encoding the graph structure, for example, the graph Laplacian. In each layer of the network, a node in the graph updates its latent representation, aggregating feature information within its neighbourhood, and learns a low-dimensional representation of the graph and its features. More precisely, in layer $l$, each node update is some function of the diffusion of the weighted features: $H^{(l+1)}= \psi(\tilde{A} H^{(l)} W^{(l)})$ with learnable weights $W^{(l)}$, (nonlinear) activation function $\psi$ and $H^{(0)}$ as the feature matrix $X$.
The output of the GCN for each node in the graph is a $d$-dimensional vector representing this node's embedding. This representation can be used for downstream tasks such as node or graph classification or regression.\par \smallskip
{\bf Graph auto-encoders.} Graph auto-encoders aim to learn a mapping of the graph input $(A, X)$ to an embedding $Z \in \R^{n \times d}$. This embedding is optimized such that, given a decoder, we can recover the adjacency matrix $A$ from the embedding. As visualized in Figure~\ref{fig:sketch}, the encoder consists of a two-layer graph convolutional network with no output activation and hidden layer with size $h \in \Nat$. The decoder is a simple dot product:
\begin{equation}
    \underbrace{Z_{\relu} = \GCN(X, A) = \tilde{A}\ \relu(\tilde{A} X W^{(0)}) W^{(1)}}_{\text{encoder}}, \quad \underbrace{\hat{A}_{\relu} = \sigma(Z_{\relu}Z_{\relu}^{\transpose})}_{\text{decoder}}.
    \label{eq:gae}
\end{equation}
Here $Z_{\relu}\in \mathbb{R}^{n \times d}$ is the latent space representation with embedding dimension $d$, and the matrix $\hat{A}$ is the reconstructed adjacency matrix. Note that $\hat{A}$ is a real-valued matrix that aims to capture the likelihood of each edge in the graph. To obtain a binary adjacency matrix as a final result, we discretize by thresholding the values at $0.5$.
The loss function optimized by the auto-encoder is the cross-entropy between the target matrix $A$ and the reconstructed adjacency matrix $\hat{A}$. It is minimized over the encoder's parameters $W^{(l)}$. The decoder is a deterministic function and is not learned from the data. For a graph with adjacency matrix $A$,  latent space embedding $Z$, and with $\sigma(\cdot)$ denoting the sigmoid function, the loss is formalized as follows: 
\begin{equation}
    l_{A}(Z) = \sum_{ij = 1}^n a_{ij} \log(\sigma(z_i z_j^{\transpose})) + (1 - a_{ij}) (1 - \log(\sigma(z_i z_j^{\transpose}))).
    \label{eq:loss}
\end{equation}
Below we compare GAEs based on a relu encoder with a simplified model based on a linear encoder.
\section{Relu encoders have at most the representational power of linear encoders}
Graph auto-encoders are widely used for representation learning of graphs. However, it has also been observed that linear models outperform them on several tasks. In this section, we investigate this empirical observation from a theoretical point of view. We define a graph auto-encoder that uses a simple linear map as the encoder and prove that the derived model has larger representational power than a relu encoder. To this end, we introduce a linear encoder based on a linear map of the diffusion matrix $Z_{\lin} = \tilde{A} W$, where $Z_{\lin}$ is the latent space representation. The loss function and the decoder remain as in the relu model. 
For a relu encoder, the dimension $h$ of the hidden layer satisfies $g>h>d$ 
(for the linear encoder, the parameter $h$ does not exist). 
We now define the solution spaces to help us describe the respective models' behavior. They will represent the embeddings that can be learned by relu and linear encoders, both with and without node features and more general functions on graphs: 
\begin{center}
$\mathcal{Z}_{\lin} = \{Z \in \mathbb{R}^{n \times d} | Z = \tilde{A}W \text{ for } W \in \mathbb{R}^{n \times d}\}$ \\
$\mathcal{Z}_{\lin, X} = \{Z \in \mathbb{R}^{n \times d} | Z = \tilde{A}XW \text{ for } W \in \mathbb{R}^{g \times d}\}$ \\
$\mathcal{Z}_{\relu} = \{Z \in \mathbb{R}^{n \times d} | Z = \tilde{A} \relu(\tilde{A}W^{(0)}) W^{(1)} \text{ for } W^{(0)} \in \mathbb{R}^{n \times h}, W^{(1)} \in \mathbb{R}^{h \times d}\}$ \\
$\mathcal{Z}_{\relu, X} = \{Z \in \mathbb{R}^{n \times d} | Z = \tilde{A} \relu(\tilde{A}XW^{(0)}) W^{(1)} \text{ for } W^{(0)} \in \mathbb{R}^{g \times h}, W^{(1)} \in \mathbb{R}^{h \times d}\}$\\
$\mathcal{Z}_{f}= \{Z \in \mathbb{R}^{n \times d} | Z = f(\tilde{A}, X; \theta) \text{ for } \theta \in \Theta \}.$
\end{center}
Here $f$ can be any parameterized function on a graph $f: \RR^{n \times n} \times \RR^{n\times g} \times \Theta \to \RR^{n  \times d}$ and $\Theta$ can be an arbitrary set of parameters.
It is easy to see that $\mathcal{Z}_{\lin, X} \subseteq \mathcal{Z}_{\lin}$. In words: when we compare two linear encoders, one with node features and one without node features, the one with node features has a more restricted solution space. The reason is that the feature matrix $X$ is at most rank $g$ and typically has a low rank compared to the diffusion matrix $\tilde{A}$, which restricts the image of $\tilde{A}X$. The same holds for the relu encoder: $\mathcal{Z}_{\relu, X} \subseteq \mathcal{Z}_{\relu}$. The relation between the linear encoder and the relu encoder is less obvious. One might guess that the relu encoder can learn more mappings than a linear encoder. However,  we now show that this is not the case: we will see that $\mathcal{Z}_{\lin} \supseteq \mathcal{Z}_{\relu}$, focusing on the featureless model. This relation implies that the linear encoder has a larger representational power than a relu encoder.\\

While this paper focuses on a specific architecture that has been introduced in previous work and is heavily used in practice, some of our insights extend to any encoder function $f$ under mild assumptions on the input graph. Thus, we first state our result in a more general way.
We make the following assumptions: 

\begin{assumption} \label{ass:manynodes}
The number of nodes $n$ is larger than the feature dimension $g$.
\end{assumption}

\begin{assumption} \label{ass:lowrank}
The rank of the diffusion matrix ${\tilde{A}}$ is larger than or equal to the embedding dimension $d$.
\end{assumption}

\begin{assumption} \label{ass:invariantloss}
The loss-function $l$ is invariant to orthogonal transformations, that is, $l_A(Z) = l_A(RZ)$ for any orthogonal matrix $R$.
\end{assumption}

$\bullet$ Assumption~\ref{ass:manynodes} is typically fulfilled in practice. Vertices are typically described by a moderate number of features. Especially for larger graphs, this is a weak assumption.\\
$\bullet$ Assumption~\ref{ass:lowrank} is usually satisfied because the embedding dimension is a choice, and a graph auto-encoder aims to find a low-dimensional representation. We can also think about this assumption in the context of the informational content of our data. If the input matrix is of low rank, it holds little information; we will not gain information by embedding it into a higher dimensional space. Setting the embedding dimension $d$ smaller than the rank of the input matrix $\rank(\tilde{A})$ makes sense.\\
$\bullet$ Assumption~\ref{ass:invariantloss} is fulfilled in the graph auto-encoder setting; the loss given in Equation~\ref{eq:loss} is based on pairwise distances given by the dot-product. This loss is invariant to orthogonal transformations. Generally, one could construct loss functions not invariant to orthogonal transformations. However, orthogonal transformations usually preserve pairwise metrics, and as a result, all loss functions based on a decoder incorporating pairwise distances will also be invariant to orthogonal transformations.

To prove our main theorem, we first observe that given Assumption 2, for any set $P$ of $n$ points in a $d$-dimensional subset of $\RR^n$, we can find an orthogonal transformation that maps the $\linspan$ of $A$ so that it contains all points in $P$.

\begin{lemma}\label{lem:roteted_span}
For $P \in \RR^{n \times d}, A \in \RR^{n \times m} $ with $\rank(P) \leq rank(A)$, there exists an orthogonal matrix $R$ with $\linspan(P) \subseteq \linspan(RA)$.
\end{lemma}

We can now prove that the linear model can outperform any other function $f$ on the graph $G$ during training.

\begin{theorem}[Minimal Loss]\label{th:linear}
Consider a fixed graph $G = (A, X)$. Let $\tilde{A}$ be the diffusion matrix of G and let $Z \in \{Z \in \RR^{n \times d} ~|~ Z = f(A, X; \theta) \text{ for } \in \Theta\}$ be any latent space representation of any graph function $f(A, X; \theta)$. Let $l$ be a loss function and assume that Assumptions 1-3 are satisfied. Then the linear model can find a a latent space representation with loss at least as good as the relu encoder.
\end{theorem}

\begin{proof} With Lemma~\ref{lem:roteted_span} it holds that for any $Z \in \RR^{n\times d}$ and any matrix $\tilde{A}\in \RR^{n\times m}$ with $\rank(\tilde{A}) \geq \rank(Z)$, there exists an orthogonal matrix $R\in \RR^{n\times n}$ such that all points in latent space representation $Z$ are in the image of $R\tilde{A}$:

$$\exists R \text{ orthogonal s.t. } \{R\tilde{A} W ~|~ W \in \RR^{m \times d}\} \supseteq \{Z ~|~ Z = f(A, X;\theta) \in \RR^{n \times d} \}.$$

This implies that there exists a weight matrix $W \in \RR^{m \times d}$ with $l(Z) = l(R\tilde{A}W) = l(\tilde{A}W)$.
In other words, the minimal loss over the set of possible solutions that is learnable by the linear model is smaller or equal to any loss over the solution space achievable by the function $f$: 
$$ \inf\{l(Z_f) ~|~ Z_f \in \mathcal{Z}_{f}\} \geq \inf \{l(Z_\text{lin}) ~|~ Z_\text{lin} \in \mathcal{Z}_\text{lin}\}$$
\end{proof}

Theorem~\ref{th:linear} considers a quite general statement. The graph auto-encoder architecture is one particular instance of a function $f$. Given the specific architecture of the auto-encoder we can show that the solution space of the relu-encoder is contained in the solution space of the linear model.
\begin{corollary}[Representational power of GAEs]\label{cor:linear}
For any (trained) graph auto-encoder in $\mathcal{Z}_{\relu} \supseteq \mathcal{Z}_{\relu, X}$, there exists an equivalent, featureless linear encoder in $\mathcal{Z}_{\lin}$ that can achieve the same training loss because we have $\mathcal{Z}_{\lin} \supseteq \mathcal{Z}_{\relu}$.\\
\end{corollary}

We refer to Appendix~\ref{app:proofs} for the proof. Note that if we assume that $\rank(AF) > d$, then the rank of $AF$ is larger than the embedding dimension $d$. Theorem~\ref{th:linear} and Corollary~\ref{cor:linear} hold for the solution space of the linear model, including features: $\mathcal{Z}_\text{lin,X}$. We discuss the influence of the features on the representational power of the linear model in Section~\ref{sec:bias}.

Corollary~\ref{cor:linear} has a simple but strong implication: in principle, the (featureless) linear model is more powerful than the traditional GAE (with or without features). Here ``more powerful'' means that the linear model can achieve better training loss. \par\smallskip

{\bf The issue of representational power.} Note that we do not claim that $$\{f: \RR^{n \times g} \to \RR^{n \times d}~|~f(A, X) \text{ is any nonlinear GNN} \} \subseteq \{f: \RR^{n \times g} \to \RR^{n \times d}~|~f(A, X) = A~W \text{ for } W \in \RR^{g \times d}\},$$ which would mean that the function class of linear models contains the function class of nonlinear models. However, in the setting of a graph neural network, we usually have a single graph as an input with a fixed number of nodes $n$. In this setting, the goal is to find an embedding of this fixed graph. Our result says that $\mathcal{Z}_{f} \subseteq \mathcal{Z}_\text{lin}$ and in particular $\mathcal{Z}_{\tt relu, X} \subseteq \mathcal{Z}_{\tt relu} \subseteq \mathcal{Z}_{\tt lin}$. In words: The solution space of the nonlinear model is contained in the solution space of the linear one, showing that the representational power of the linear model is larger than the one of the nonlinear model.\par\smallskip
{\bf Generalization properties.} Note that Corollary~\ref{cor:linear} does not claim that the linear model without features outperforms the relu model. This corollary only considers the training loss. Thus we can not derive anything about the test performance or the solution that an optimization algorithm might find. So far, we are just consider training loss and the existence of a weight matrix $W$. From the above discussion, it is apparent that the linear encoder is less restricted, which means that it introduces a weaker inductive bias than the relu encoder. Note that this is independent of the fact that we used the relu activation function or the depth of the convolutional network.
Regarding the test performance, the question is whether the restriction that the relu encoder puts on the solution space induces a ``good'' inductive bias that helps improve the model's test error. As we see below, we empirically find that when using features, a relu encoder does outperform the (featureless) linear encoder on test data. The $\relu$ activation and the features both restrict the solution space and introduce an inductive bias. Next, we investigate which of these two ingredients helps to restrict the representational power in a meaningful way.
\section{The inductive bias of adding features}\label{sec:bias}
This section investigates how features influence the solution space of the linear model and the inductive bias that we introduce by adding features.
While Corollary~\ref{cor:linear} shows the superior representational power of the linear model without features, in Section~\ref{sec:exp}, we will observe improved test performance for models that do use feature information. Whenever the features are low rank compared to the diffusion matrix, they restrict the solution space and introduce an inductive bias that facilitates learning and can improve generalization if they hold helpful information.
We observe this behaviour for both considered architectures, the relu and the linear encoder.
Consequently, we conjecture that the presence of features, and not the relu nonlinearity, introduces the necessary bias to restrict the solution space in a meaningful way. However, some care is needed. As we will see, adding features can also harm performance if they contradict the structure of the target graph.
In order to quantify the harm, we derive a measure for misalignment between graph structure and feature information. Based on this, we give theoretical insights about the restriction of the solution space when including features in the linear model.

Intuitively, we want to call a graph $A$ and the corresponding features $X$ ``aligned'' if they encode ``similar'' information. 
In our setting, this would be the case if $A$ is ``close to'' $XX^{\transpose}$: considering the features $X \in \mathbb{R}^{n \times g}$ as node embeddings, the matrix $XX^{\transpose}$ reconstructed adjacency matrix in a similar way as the dot product decoder. In other words, nodes should be close together in the feature space if and only if an edge connects them. Since the adjacency matrix is symmetric, a decomposition $A = YY^T$ for $Y\in \mathbb{C}^{n \times n}$ always exists. The error that we might introduce by adding features then comes from the fact that $XX^{\transpose}$ is a (low rank) approximation of $YY^{\transpose} = A$: Even for perfectly aligned features, the feature dimension $g$ is typically much smaller than $n$.
Note that if we consider one graph and two different feature matrices with the same span, both features restrict the solution space in the same way and give rise to the same optimal embedding:
\begin{proposition}[Features with the same span induce the same solution space]\label{prop:featurespan}
Let $\tilde{A} \in \mathbb{R}^{n \times n}$ be the diffusion matrix of a fixed graph and $U, F \in \mathbb{R}^{n \times g}$ two different feature matrices for this graph. If $\linspan(U) = \linspan(F)$, then $\mathcal{Z}_{\lin, U} = \mathcal{Z}_{\lin, F}$.
\end{proposition}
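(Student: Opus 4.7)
The plan is to reduce the statement to a routine fact about images of linear maps: the set $\mathcal{Z}_{\lin, X}$ only depends on the column space of $\tilde{A}X$, and if $U$ and $F$ have the same span then $\tilde{A}U$ and $\tilde{A}F$ have the same column space. I would first fix the convention that $\linspan$ here refers to the column span of the feature matrix, which is the natural reading since $W \in \R^{g \times d}$ multiplies on the right and therefore $\tilde{A}XW$ is a matrix whose columns are linear combinations of the columns of $\tilde{A}X$.

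Next, I would exploit the hypothesis $\linspan(U) = \linspan(F)$ to obtain change-of-basis matrices: each column of $U$ lies in $\linspan(F)$, so there exists $M \in \R^{g \times g}$ with $U = FM$; symmetrically, there exists $N \in \R^{g \times g}$ with $F = UN$. Note that we do not need $M$ or $N$ to be invertible; mere existence is enough, which is convenient because if the common span has dimension strictly less than $g$, invertibility can fail.

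With these two identities in hand, the proof is a two-line containment argument. For any $W \in \R^{g \times d}$, write $\tilde{A}UW = \tilde{A}F(MW)$, which shows $\tilde{A}UW \in \mathcal{Z}_{\lin, F}$; hence $\mathcal{Z}_{\lin, U} \subseteq \mathcal{Z}_{\lin, F}$. Running the same argument with $F = UN$ gives the reverse inclusion $\mathcal{Z}_{\lin, F} \subseteq \mathcal{Z}_{\lin, U}$, and the two sets coincide.

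There is no real obstacle here; the only thing one has to be a bit careful about is the interpretation of $\linspan$ and the fact that the change-of-basis matrices need not be invertible. The result is essentially the observation that $W \mapsto \tilde{A}XW$ is a linear parameterization whose image depends on $X$ only through its column space, so reparameterizing the features within the same subspace does not alter the attainable embeddings.
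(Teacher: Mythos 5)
Your proof is correct and takes essentially the same route as the paper: both arguments rest on the fact that span equality lets you express each achievable embedding under one feature matrix as an embedding under the other. The only difference is packaging — the paper argues column-by-column via preimages under the maps $x \mapsto \tilde{A}Ux$ and $x \mapsto \tilde{A}Fx$, while you bundle the same linear combinations into a single factorization $U = FM$, which makes the containment $\tilde{A}UW = \tilde{A}F(MW)$ immediate and sidesteps having to separately justify $\linspan(\tilde{A}U) = \linspan(\tilde{A}F)$.
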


Intuitively, this means that if $U$ and $F$ span the same space, the corresponding models can, in principle, learn the same mappings and output the same embeddings.
\begin{proof}
We show that for every weight matrix $W_U$ with $Z = \tilde{A} U W_U$, there exists a weight matrix $W_F$ with $Z = \tilde{A} F W_F$. From $\linspan(U) = \linspan(F)$ it follows that $\linspan(\tilde{A}U) = \linspan(\tilde{A}F)$. We define corresponding linear maps $f_U, f_F: \mathbb{R}^d \to \mathbb{R}^{n}$ with $f_U(x) = \tilde{A}U x$ and $f_F(x) = \tilde{A}F x$. Since $\image(f_U) = \image(f_F)$ we know that every point $f_U(x) = y$ has at least one pre-image in $f_F$. 
Let $W_U= [w_{U_1}, ..., w_{U_d}]$ and $Z = [z_1, ..., z_d]$ be such that  $z_i = f_U(w_{U_i})$. For every $z_i$ there exists at least one $w_{F_i}$ with $f_F(w_{F_i}) = z_i = f_U(w_{U_i})$. Since this holds for any matrix $W_U$ we can conclude that $\mathcal{Z}_{\lin, U} = \mathcal{Z}_{\lin, F}$.
\end{proof}
We have seen that two different feature matrices with the same span restrict the solution space equivalently.
Next, we derive a condition for the alignment between features $X$ and graph structure $\tilde{A}$ that quantifies when features are potentially helpful. To do so, we investigate two settings. In the first setting, the features align with the graph structure and can introduce a valuable restriction of the solution space. In the second setting, the features contradict the graph structure and restrict the solution space in such a way  that the optimal embedding can no longer be recovered.
The intuition for helpful features corresponds with the one for good embeddings: If points are connected in the graph, they should have similar feature vectors. Consequently, if we were not to embed the nodes into a low-dimensional space but were simply interested in any latent representation of the graph, the features themselves would represent desirable embeddings.
\begin{proposition}[Features cannot hurt if they perfectly align with the graph structure]\label{prop:solution}
Let $X$ be the feature matrix of the graph with diffusion matrix $\tilde{A}$ and let the embedding dimension coincide with the feature dimension $g$. If $\image(\tilde{A}X) = \image(X)$, then there exists a weight matrix $W$ with $X = \tilde{A} X W$. 
\end{proposition}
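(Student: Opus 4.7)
The plan is to reduce the matrix equation $X = \tilde{A}XW$ to a system of column-wise linear equations and then read off the existence of $W$ directly from the image hypothesis. Writing $X = [x_1, \ldots, x_g] \in \mathbb{R}^{n \times g}$ and $W = [w_1, \ldots, w_g] \in \mathbb{R}^{g \times g}$ column by column (here I use the assumption $d = g$ to make $W$ square), the statement $X = \tilde{A}XW$ is equivalent to requiring $x_j = \tilde{A}X w_j$ for every $j = 1, \ldots, g$. So it suffices to find a pre-image $w_j \in \mathbb{R}^{g}$ of each column $x_j$ under the linear map $w \mapsto \tilde{A}Xw$.

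Next, I would invoke the hypothesis $\image(\tilde{A}X) = \image(X)$. By definition, the right-hand side is exactly $\linspan\{x_1, \ldots, x_g\}$, so each column $x_j$ of $X$ lies in $\image(X)$ and hence in $\image(\tilde{A}X)$. That is precisely the statement that $x_j$ admits at least one pre-image $w_j$ under $w \mapsto \tilde{A}Xw$. Stacking these pre-images into the columns of $W$ produces the desired weight matrix with $\tilde{A}XW = X$. If one prefers a constructive expression rather than an existence argument, one can simply set $W := (\tilde{A}X)^{+} X$, where $(\tilde{A}X)^{+}$ is the Moore--Penrose pseudoinverse; the identity $(\tilde{A}X)(\tilde{A}X)^{+} y = y$ holds for every $y \in \image(\tilde{A}X)$, and applying this column by column to $X$ (whose columns lie in $\image(\tilde{A}X)$ by hypothesis) yields $\tilde{A}X W = X$.

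There is no significant obstacle in this argument; it is essentially a restatement of the image hypothesis. The only point requiring a little care is bookkeeping of dimensions: we need $W$ to be of shape $g \times d$ in the general linear encoder, and the assumption $d = g$ in the statement makes this shape $g \times g$ so that the above pre-image construction lives in the correct space. The result can be read as saying that under alignment, the features $X$ themselves are a feasible embedding of the linear model $\mathcal{Z}_{\lin, X}$ with dimension $d = g$, which is exactly the interpretation the surrounding discussion is building toward.
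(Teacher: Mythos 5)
Your argument is correct and is essentially the same as the paper's: both reduce $X=\tilde{A}XW$ to finding a pre-image of each column of $X$ under $w\mapsto \tilde{A}Xw$, which exists because $\image(X)=\image(\tilde{A}X)$. The explicit choice $W=(\tilde{A}X)^{+}X$ is a nice constructive addition but does not change the substance of the proof.
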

This proposition shows that if $\image(\tilde{A}) = \image(\tilde{A}X)$, then the model can recover the features.
\begin{proof}
Since $\tilde{A}X$ and $X$ span the same subspace, we can find a $w_2$ for any $w_1$ with $Xw_1~=~\tilde{A}Xw_2$ and $w_1, w_2 \in \mathbb{R}^g$.
We interpret every column in a matrix as an independent vector to derive that for any matrix $W_1$ we can find a matrix $W_2$ with $XW_1 = \tilde{A}XW_2$ and $W_1, W_2 \in \mathbb{R}^{g \times g}$.
\end{proof}
The proposition suggests that if the features encode the same structure as the graph's adjacency matrix, they do not negatively restrict the solution space.
We now consider the opposite case: if the node relations encoded by their features contradict the adjacency structure of the graph, we can neither recover the features nor the adjacency matrix. In this setting, features restrict the function space in a harmful way, preventing the easiest and trivially optimal embedding. From these observations, we then derive a definition to measure the alignment between a graph's structure and node features.
\begin{proposition}[Features can hurt if they do not align with the graph structure]\label{prop:nosolution}
Let $X$ be the graph's feature matrix, and assume that the embedding dimension coincides with the feature dimension, $d = g$. Let $\tilde{A}$ be the diffusion matrix and let $Y \in \mathbb{R}^{n \times d}$ be a matrix such that $\tilde{A} = YY^T$. Assume that the graph can be perfectly represented in $g$ dimensions, that is, there exists a $Z\in \mathbb{R}^{n \times g}$ with $ZZ^{\transpose} = \tilde{A}$. Let $\rank(Y) = \rank(\tilde{A}) = g$.
\begin{enumerate}
    \item If $\image(\tilde{A}X) \neq \image(X)$, then there exists no weight matrix $W$ with $X = \tilde{A} X W$.
    \item If $\image(\tilde{A}X) \cap \image(X)^{\perp} \neq \emptyset$, then there exists no weight matrix $W$ with $Y = \tilde{A} X W$.
\end{enumerate}
\end{proposition}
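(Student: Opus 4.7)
Both parts are most naturally approached by contrapositive. For Part~1, assume $X = \tilde{A} X W$ holds for some $W$. Reading the equation column by column, every column of $X$ lies in the column span of $\tilde{A} X$, so $\image(X) \subseteq \image(\tilde{A} X)$. Since $\rank(\tilde{A} X) \leq \rank(X)$ always, the containment together with the reverse dimension inequality forces $\image(X) = \image(\tilde{A} X)$. Taking the contrapositive, if $\image(\tilde{A} X) \neq \image(X)$, then no such $W$ can exist, which is exactly the claim. This is essentially Proposition~\ref{prop:solution} run in reverse and should fit in a few lines.

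For Part~2, the strategy is to convert the existence question for $W$ into an invertibility condition on the $g \times g$ matrix $Y^{\transpose} X$, and then to show that the misalignment hypothesis forces that matrix to be singular. Substituting $\tilde{A} = Y Y^{\transpose}$ into $Y = \tilde{A} X W$ yields $Y = Y Y^{\transpose} X W$, i.e.\ $Y (I_g - Y^{\transpose} X W) = 0$. Because $Y$ has full column rank $g$, it admits a left inverse, and we obtain $Y^{\transpose} X W = I_g$. Since $Y^{\transpose} X$ is square of size $g$, a solution $W$ exists if and only if $Y^{\transpose} X$ is invertible, in which case $W = (Y^{\transpose} X)^{-1}$.

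Now invoke the hypothesis: pick a nonzero $v \in \image(\tilde{A} X) \cap \image(X)^{\perp}$ (reading $\neq \emptyset$ as ``contains a nonzero vector,'' since $0$ belongs to both subspaces automatically), write $v = \tilde{A} X u$, and note that $u \neq 0$ because $v \neq 0$. The orthogonality $X^{\transpose} v = 0$ reads $X^{\transpose} \tilde{A} X u = 0$, and using $\tilde{A} = Y Y^{\transpose}$ this is $(Y^{\transpose} X)^{\transpose} (Y^{\transpose} X u) = 0$. If $Y^{\transpose} X$ were invertible, so would be its transpose, forcing $Y^{\transpose} X u = 0$ and hence $u = 0$, a contradiction. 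Thus $Y^{\transpose} X$ must be singular and, by the preceding equivalence, no $W$ with $Y = \tilde{A} X W$ exists.

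\textbf{Expected obstacle.} The most delicate step is the very last implication. Over $\mathbb{R}$ one could shortcut the argument by saying ``$X^{\transpose} \tilde{A} X u = 0$ with $\tilde{A}$ PSD forces $\tilde{A} X u = 0$,'' but the paper's decomposition $\tilde{A} = Y Y^{\transpose}$ is permitted to involve complex $Y$ (cf.\ the earlier remark that every symmetric $A$ admits such a factorisation over $\mathbb{C}$), and in that generality the naive implication ``$M^{\transpose} M$ singular $\Rightarrow M$ singular'' fails. Routing the argument through invertibility of the square matrix $Y^{\transpose} X$ rather than through the Gram-type matrix $X^{\transpose} \tilde{A} X$ sidesteps this issue and lets the rank-$g$ assumption on $Y$ do its work uniformly for both the forward and reverse directions.
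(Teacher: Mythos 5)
Your proof is correct and follows essentially the same route as the paper's: both parts are argued via the contrapositive, and Part~2 reduces the existence of $W$ to invertibility of the $g \times g$ matrix $Y^{\transpose}X$ exactly as the paper does before showing that a nonzero vector in $\image(\tilde{A}X)\cap\image(X)^{\perp}$ forces that matrix to be singular. The only (welcome) difference is in Part~1, where your rank-comparison argument gets $\image(X)=\image(\tilde{A}X)$ without the extra full-rank assumption on $X$ that the paper's proof silently introduces.
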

The intuition is that if the adjacency matrix and the node features define two different structures, then neither of the structures can be encoded by the linear model. The alignment of the structures is captured by the condition on the image of $\tilde{A}X$ and $X$.

\begin{proofof}{\sl Part 1} We prove the contraposition: If there exists a weight matrix $W$ with $X = \tilde{A}XW$ then $\image(\tilde{A}X) = \image(X)$. Let $\tilde{A} \in \mathbb{R}^{n \times n}$ and let $X \in \mathbb{R}^{n \times g}$ be full rank.
Let $W$ be such that $X = \tilde{A}XW$. Since $X$ is full rank, the rank of $W$ is full and thus $\image(X) = \image(\tilde{A}XW) = \image(\tilde{A}X)$. \hfill $\Box$
\end{proofof}

\begin{proofof}{\sl Part 2} For $Y= \tilde{A} X W = YY^{\transpose} X W$ to hold we need $W$ to be the $g \times g$ inverse of $(Y^TX)$. We prove that if $\image(\tilde{A}X) \cap \image(X)^{\perp} \neq \emptyset$, then $Y^{\transpose}X \in \mathbb{R}^{g \times g}$ is of rank smaller $g$ and thus not invertible. We again prove the contraposition: If $X^{\transpose}Y$ is full rank then $\image(\tilde{A}X) \cap \image(X)^{\perp} = \{0\}$. 
Note that $Y^{\transpose}X$ is full rank by assumption and thus: $\image(\tilde{A}X) = \image(YY^{\transpose}X) = \image(Y)$. Generally it holds that $\image(X)^{\perp} = \ker(X^{\transpose})$.
Let $w \in \image(Y) \cap \ker(X^{\transpose}).$ Since $w$ is in $\image(Y)$ there exists a $v \in \mathbb{R}^d$ with $w = Yv$. Since $w$ in $\ker(X^{\transpose})$, it follows that $X^{\transpose}Yv = X^{\transpose}w = 0$. This implies $v = (X^{\transpose}Y)^{-1} 0 = 0$ and thus $w = 0$ and thus $\image(Y) \cap \ker(X^{\transpose}) = \{0\}$. \hfill $\Box$
\end{proofof}

Note that the dot product is invariant under orthogonal transformations; thus, for the features, every orthogonal transformation can be absorbed in the weight matrix. Proposition~\ref{prop:nosolution} also holds for orthogonally transformed features and embeddings. Thus given the dot-product as a decoder there is no other embedding $\tilde{Y} \neq Y$ that recovers the adjacency structure of the graph.\\

Based on all these insights, we propose a misalignment definition between a graph and its node features.
Intuitively, we want to say that $A$ and $X$ are misaligned if $\tilde{A}X$ and $X$ do not span the same subspace.
\begin{definition}[{\bf Misalignment of graph and features}]\label{def:alignment}
Let $\tilde{A}$ be the symmetrically normalized adjacency matrix and $\tilde{X}$ the normalized feature matrix such that each column has $l_2$-norm $1$. 
We measure misalignment of a graph $A$ and its features $X$ using the distance $d_{\algn}~:~\mathbb{R}^{n \times n}~\times~\mathbb{R}^{n \times g}~\to~\mathbb{R}$, where the $\arccos$ function is applied to every entry in the matrix: 
$$d_{\algn}(A, X) \mapsto \tr(\arccos(\tilde{A} \tilde{X} \tilde{X}^{\transpose})).$$
\end{definition}
We got inspired to this measure by the geodesic distance on the Grassmannian manifold, which measures the principal angles of two subspaces $AX$ and $X$. If $d_\text{algn}$ is low, the alignment between $A$ and $X$ is high. It is easy to see that with this formulation, $d_\text{algn}(A, X) = 0$ if $A = XX^T$. This is consistent with our intuition that the features and structure of the graph are perfectly aligned if they encode the same structure. 
%Note that normalizing the features will not change their span, thus not influence the solution space. 
\par\smallskip
{\bf The generalization error of linear encoders is task dependent.}\label{sec:bias:generalization}
Learning and generalization can only work if the learning architecture encodes the correct bias for the given task. We now consider two tasks to demonstrate the negative and positive impact of the bias that we introduce when using features.\par\smallskip
In recent work, the most common task is {\bf link prediction}.
For this task we consider a graph given by its adjacency matrix $A \in \{0, 1\}^{n \times n}$ and a feature matrix $X \in \mathbb{R}^{n \times g}$. We construct a train graph $\bar{A}$ by deleting a set of edges from the graph, which we later, during test time, would like to recover.
We train to minimize the loss $l(\bar{A}, X)$, while the test loss is given by $l(A, X)$.
In our opinion, link prediction is a somewhat strange task when talking about generalization. We assume that the input graph is incomplete or perturbed. During training, we optimize the GAE for an adjacency matrix that we do not wish to recover during test time (because we also want to discover the omitted edges). However, this cannot be encoded in the loss of the auto-encoder (compare Eq.~(\ref{eq:loss})). For the encoding to discover the desired embedding, the necessary bias to prevent the model from optimally fitting the training data must be somewhere else in the model. We claim that the features introduce this necessary bias, and we evaluate the link prediction task because it is so prominent in the literature. \par\smallskip
We consider a second task, {\bf node prediction}. For this task, the is graph given by its adjacency matrix $A \in \{0, 1\}^{n \times n}$ and the feature matrix $X \in \mathbb{R}^{n \times g}$. We construct a train graph $\bar{A}, \bar{X}$ by deleting a set of nodes from the graph $A$ and the corresponding features from the feature matrix $X$. We train to optimize $l(\bar{A}, \bar{X})$. We then compute the test loss on the larger graph $A, X$:  $l(A, X)$ with the goal of predicting the omitted nodes. 
The goal is to learn a mapping from the input graph to the latent space that generalizes to new, unseen vertices. 

\par\smallskip
\begin{figure}[t]
    \centering
    \input{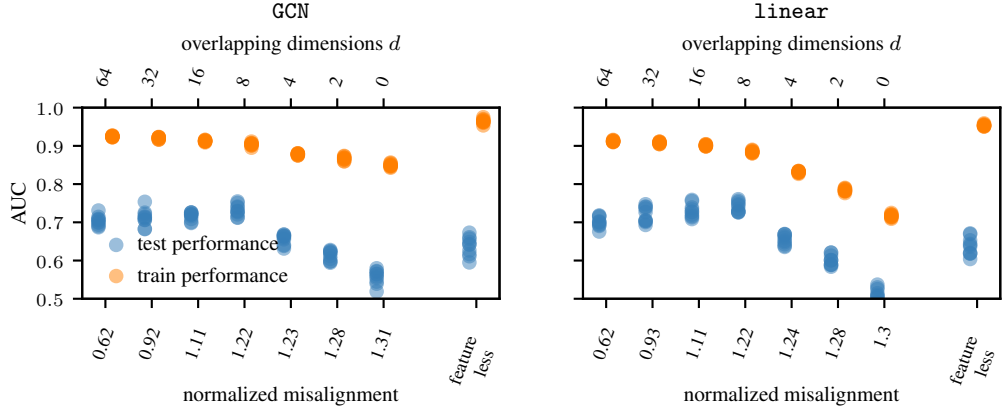}
    %\vspace*{-10pt}
    \caption{When features help. In the link prediction task, features help to regularize if they align with $\tilde{A}$ because they encode the target structure. Along the x-axis we plot the alignment between the graph and the features with two different scores: on the bottom, we plot misalignment as measured by $d_\text{algn}$, described in Definition~\ref{def:alignment} (low is good alignment, high is bad alignment). On the top, we show the alignment given by the number of overlapping dimensions of the two subspaces.
    The figures show train and test performance for the link prediction task on the synthetic dataset described in Section~\ref{sec:exp}. We embed the points into eight dimensions. We plot values for ten independently sampled graphs. The left figure shows results for the architecture with the relu encoder and the right figure shows results for the linear model.}
    \label{fig:gaussian_edge}
\end{figure}
Intuitively it is clear that, depending on the task, the usefulness of the introduced bias might vary. We empirically evaluate the influence of features and compare the performance of the linear and the nonlinear encoder with and without the bias of features. 
\section{Empirical evaluation}\label{sec:exp}
In this section, we evaluate the influence of node features and the role of their alignment both for linear and relu encoders. We will see that the results support the theory that linear encoders outperform relu encoders.

{\bf Setting.}
We consider two models for empirical evaluation. The first is the relu encoder defined in Equation~(\ref{eq:gae}) based on the implementation of \citet{VGAE}, and the second is the linear encoder. Since the relu encoder has two layers and thus two weight matrices, we realize the weights for the linear model accordingly: $Z_{\lin} = \tilde{A} X W^{(0)} W^{(1)}$. This definition does not contradict the one from above; we can simply choose $W = W^{(0)} W^{(1)}$. Regarding the representational power of the model, this makes no difference. However, from an optimization perspective, it can influence the training (see \citet{deeplinearNN}), and we aim to compare the models as fair as possible.
We use the same objective function as \citet{VGAE}, weighting the edges to tackle the sparsity of the graph and regularizing by their mean squared norm to prevent the point embeddings from diverging. We train using gradient descent and the Adam optimizer for 200 epochs. 
Due to sparsity, the accuracy of the recovered adjacency matrix is not very insightful. As is done in previous work, we measure the performance of the auto-encoders by the area under the receiver operating characteristic curve (AUC).
To get a representative metric we use all present (positive) edges from the graph and uniformly sample the a number of what we call `negative' edges, that is, edges that are not present in the graph's adjacency matrix. As a result, we only use some of the entries of the adjacency matrix for evaluation and get a balanced set of present and non present edges in the graph. To get representative results, we run every experiment 10 times. In real-world datasets, we randomize over the test set, drawing different edges or nodes each time. For the synthetic datasets, we draw ten independent graphs from the same generative model described below, then construct the test and train sets on these.\par\smallskip

\begin{figure}[t]
    \centering
    \input{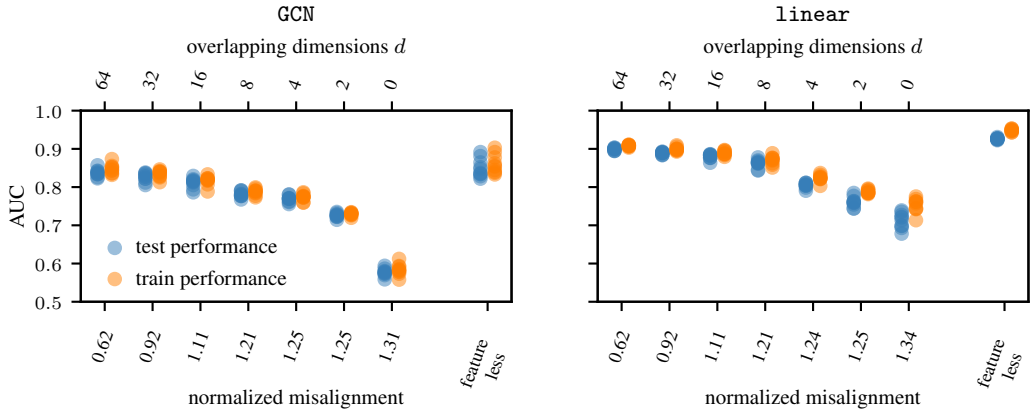}
    \caption{When features harm. Adding features can harm the node prediction task performance, preventing optimal graph fitting. The x-axis shows the alignment between the graph and the features with two different scores: we plot the misalignment as measured by $d_{\algn}$, described in Definition~\ref{def:alignment} (low is good alignment, high is bad alignment). On the top, we show alignment given by the number of overlapping dimensions of the two subspaces.
    The figures show train and test performance for the node prediction task on the synthetic dataset described in Section~\ref{sec:exp}. We embed the nodes into eight dimensions. We plot values for ten independently sampled graphs. The left figure shows results for the architecture with the relu-encoder and the right figure shows results for the linear model.}
    \label{fig:gaussian_node}
\end{figure}

{\bf Datasets.}
We use the following generative model to get a {\bf synthetic dataset}  with aligned features. 
We draw $n=1000$ data points from a standard Gaussian distribution in $g=64$ dimensions. We normalize the points to lie on the unit sphere and store their coordinates in a feature matrix $X \in \mathbb{R}^{n \times g}$. We then construct the graph by thresholding the matrix $XX^{\transpose}$ to get a 0-1-adjacency matrix of density between $0.01$ and $0.02$. 
With this construction, the feature matrix and the graph adjacency matrix are as aligned as possible, given the dimensional restriction. We generate data that simulates misalignment as follows. 
We aim to construct features that overlap in $d$ dimensions with the graph's optimally aligned, true features. Recall that changing the span does not restrict the solution space, so we can replace the features with any other basis of the same space. The feature matrix $X$ is spanned by $\{u_1, ..., u_g\}$. We construct a perturbed feature matrix $X_\texttt{perturbed}$ that has the same rank as $X$ and spans the first $d$ dimensions of the space; $\linspan(u_1, ..., u_d)$, but is orthogonal to the remaining space; $\linspan(u_{d+1}, ..., u_g)$.
We do so by calculating the singular value decomposition $X = U \Sigma V^{\transpose}$, then choose the first $d$ columns and the last $g-d$ columns from $U$. Let $U = [u_1, ..., u_n]$, we set $X_{perturbed} = [u_1, ..., u_d, u_{n-(g-d)}, ..., u_n]$. Note that the vectors $u_{n-(g-d)}, ..., u_n$ lie completely in the orthogonal complement of $\linspan(u_1, ..., u_g)$, which means that the span of the optimal features $X$ and the perturbed features $X_\texttt{perturbed}$ overlap in exactly $d$ dimensions. We calculate the misalignment between the graph $A$ and the perturbed features $X_\texttt{perturbed}$ as in Definition~(\ref{def:alignment}). 
By this construction, we do not fix the degree of rotation of the non-overlapping dimensions. As a result, the misalignment values for an overlap of a specified dimension $d$ can vary slightly in different runs. However, as we normalize the values with regard to the dimensions, we still get insights for quantitative comparison.
For example, we get a misalignment value $d_\text{algn}$ of about 0.92 for an overlap of 32 dimensions on the synthetic dataset with 64 dimensions overall. This corresponds to an overlap of approximately 50\% of the dimensions. We get a misalignment value $d_\text{algn}$ of approximately 1.11 for an overlap of 16 dimensions, corresponding to an overlap of approximately 25\% of the dimensions.

As {\bf real world datasets} we consider three standard benchmarks: \cora, \cseer, and \pubmed. Similar to previous work, we embed the nodes into $16$ dimensions when considering the link prediction task.
For the node prediction task, embedding into $16$ dimension turns out to be too simple a task. We thus  use $4$ as the embedding dimension. 
For graph statistics of all used datasets and more details on the experimental setup, see Appendix~\ref{app:setup}.\par\smallskip

\begin{figure}[t]
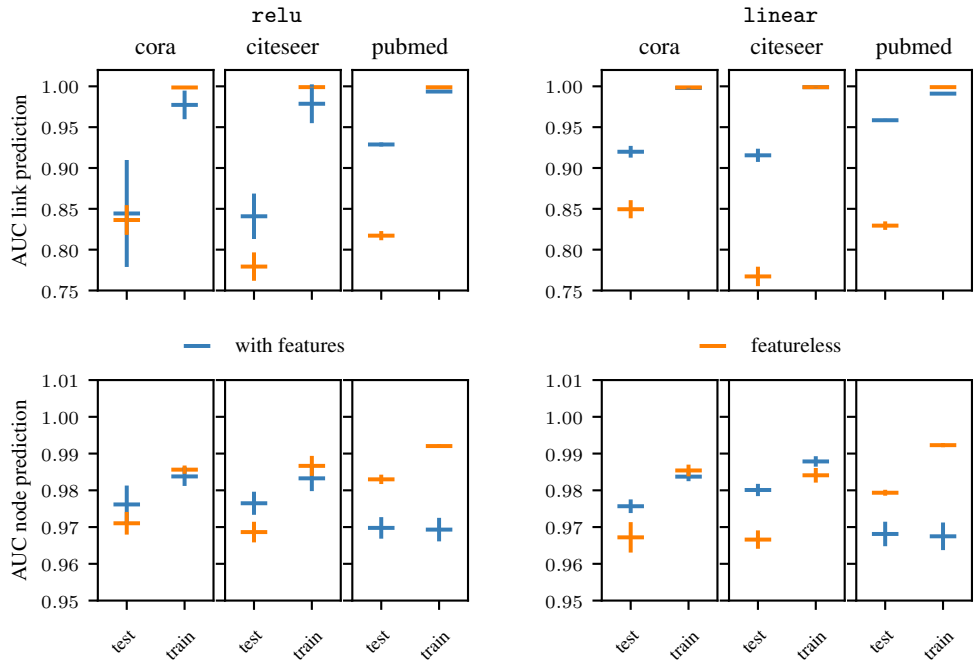

    \centering
    \begin{minipage}{0.4\textwidth}
        \input{figures/real_world4_1}
    \end{minipage}
    \begin{minipage}{0.4\textwidth}
        \hfill\input{figures/real_world4_2}
    \end{minipage}\\ \vspace*{-0.9cm}
    \begin{minipage}{0.4\textwidth}
        \input{figures/real_world4_3}
    \end{minipage}
    \begin{minipage}{0.4\textwidth}
        \hfill\input{figures/real_world4_4}
    \end{minipage}
    \caption{Train and test performance for the three real-world datasets \cora \cseer and \pubmed. Vertical lines indicate standard deviation. We consider relu and linear encoder models with and without features. Top: Link prediction task; we embed points into 16 dimensions. Bottom: Node prediction task, points are embedded into four dimensions. The normalized misalignment scores are $0.67$ for \cora, $0.69$ for \cseer and $0.86$ for \pubmed.}
    \label{fig:rw}
\end{figure}

{\bf When features help for link prediction.}
In Figure~\ref{fig:gaussian_edge}, we plot the AUC performance of both encoder variants, relu and linear, both with and without features, on the link prediction task. 
We observe that both models' test performances usually decrease with increasing misalignment between the features and the graph. Adding features decreases the train performance,  indicating that features restrict the solution space. Features may encode similarities in the feature space that are not present in the (incomplete) graph. This information restricts the solution space and can help to recover those missing structures. Moreover, the featureless model shows high train performance while the test performance is low. This behavior is expected in link prediction as optimal train performance implies sub-optimal test performance. 
In Figure~\ref{fig:rw}, we observe the same behavior for the real-world datasets. The features add information about the target structure and improve the test performance.
We suggest the following perspective: In the link prediction task, features act as a regularizer preventing the model from encoding the training adjacency matrix optimally.
If they encode the desired graph structure, this regularization makes the model more robust to perturbations in the adjacency matrix, which is basically the task for link prediction. Here we assume the input graph is incomplete or perturbed, and we want to be robust against these perturbations and still recover the original graph. 
If the features do not encode the desired structure, including them can harm the performance. The relu encoder, where the features are passed through a weight multiplication and a $\relu$ activation, turns out to be more robust to false information, possibly learning to ignore (parts) of the features. The linear model can not compensate for erroneous feature information, which becomes visible in Figure~\ref{fig:gaussian_edge}, where, with more significant misalignment, the training error for the relu encoder drops slowly compared to the linear encoder. 
We can suspect slight overfitting in both settings when the overlapping dimension exceeds the embedding dimension. The effect is declining test performance, even for very aligned features. In this case, the introduced bias via the features might be too weak. \par\smallskip

{\bf When features harm.}
In Figure~\ref{fig:gaussian_node}, we visualize the performance of the models in the node prediction task. 
We observe similar behavior for the architectures using features as in the previous task. As expected, the performance decreases for an increasing distance of alignment $d_\text{algn}$. Interestingly, the test performance stays close to the train performance, which indicates that both models generalize well to unseen nodes and larger graphs. 
Supporting our intuition, we observe that for both encoders, the featureless versions outperform the architectures using features. In the node prediction task, restricting the representational power by adding features harms the model's performance.
Our experiments for the node prediction task show that featureless models can outperform the versions including features. 
Contrary to the link prediction task, in the node prediction setting, we assume that the given graph adjacency matrix already encodes the target structure of every node in the graph. Even if the features align optimally with the adjacency matrix, they do not hold additional information that is not already encoded in the adjacency matrix. In this setting, features still add a bias, but assuming that the given adjacency matrix is noiseless, they restrict the solution space in an undesired way. \par\smallskip
In practice and real-world data, we usually do not know whether the given features encode the correct structure or if the graph is complete. So, not all of the intuition discussed above directly applies. We compute the normalized distance as given by $d_\text{algn}$ and observe that the alignment is generally very good, with values of $0.6709$ and $0.6940$ for \cora\ and \cseer, respectively. Misalignment is still good but slightly worse for \pubmed\ with a value of $0.8577$, which indicates that almost 50\% of the dimensions are misaligned.
For the three benchmarks and the node prediction task, embedding into 16 dimensions already achieves nearly perfect recovery for all models. This reinforces our discussion about Assumption~\ref{ass:lowrank} being weak to no restriction. Usually, one would choose the smallest possible embedding dimension that shows promising results.
See Appendix~\ref{app:exp} for the performance when embedding into 16 dimensions.
We embed the data into only four dimensions to get more insightful results. 
In Figure~\ref{fig:rw}, we see that for the \cora\ and the \cseer\ datasets, the models using features perform slightly better than those without features. This behavior could indicate noise in the input adjacency matrix, which the feature information regularizes. For the \pubmed \ dataset, where the features are slightly less aligned, not using the features seems to perform better. 
However, this difference is minimal. Compared to the synthetic dataset and \pubmed, the two networks from \cora\ and \cseer\ have significantly higher feature dimensions in relation to the number of nodes possibly encoding redundancy. In summary, our findings indicate that the features encode mostly the correct structure, and adding them can improve optimization even for features with almost no additional information.

\section{Conclusion}
This work presents a theoretical perspective on the representational power and the inductive bias of graph auto-encoders. We consider a relu architecture and a linear architecture for the encoder. We prove that the linear encoder has greater representational power than nonlinear encoders. 
Theorem~\ref{th:linear} also extends to more advanced models with similar encoder architecture and even to other nonlinear graph functions.
Based on our experiments and empirical work in the literature, the nonlinear structure of the relu auto-encoder does not improve learning compared to the linear encoder.
Our evaluations support the idea that the introduced bias from the nonlinearity is not the crucial ingredient to reducing the solution space in a meaningful way.
On the other hand, the features can introduce a powerful inductive bias to both encoder architectures, improving their test performance. 
Whether features improve training and test performance heavily depends on the task we want to solve. 
For the two example tasks we consider in this paper, features help with link prediction but do not help node prediction.
However, supporting the idea that linear encoders have larger representational power in training and can generalize, the linear encoder outperforms the nonlinear one in both tasks.

\section*{Acknowledgments}
This work has been supported by the German Research Foundation through the Cluster of Excellence “Machine Learning – New Perspectives for Science" (EXC 2064/1 number 390727645) and the International Max Planck Research School for Intelligent Systems (IMPRS-IS).

\bibliography{paper_arxiv}

\newpage 

\appendix
\section{Proofs} \label{app:proofs}

In this Section, we provide the proofs of Lemma~\ref{lem:roteted_span} and Corollary~\ref{cor:linear} and discuss the consequences.

{\bf Lemma~\ref{lem:roteted_span} (Orthogonal Transformation).}
For $P \in \RR^{n \times d}, A \in \RR^{n \times m} $, $\rank(P) \leq rank(A)$, there exists an orthogonal matrix $R$ with $\linspan(RA) \supseteq \linspan(P)$.

\begin{proofof}{Lemma~\ref{lem:roteted_span}}
$\rank(Z) \leq rank(A) = k.$\\
$\image(A)$ is spanned by an orthogonal normal basis $\{u_{i=1, ..., d, ... k}\}$\\
$\image(Z)$ is spanned by an orthogonal normal basis $\{v_{i=1, ..., d}\}.$\par \medskip 

Let $U:=~\left(\begin{array}{c c c | c}
| & \cdots & | & \\
u_1 & \cdots & u_d & \cdots \\
| & \cdots & | & 
\end{array}\right)$ and 
$V:=~\left(\begin{array}{c c c | c}
| & \cdots & | & \\
v_1 & \cdots & w_d & \cdots \\
| & \cdots & | & 
\end{array}\right).$
\par \medskip 

Then with $R = V \left(\begin{array}{c|c} \mathbb{I}_d & 0 \\ \hline 0 & 0 \end{array}\right) U^T, \linspan(Z) \subseteq \linspan(RA)$.
\end{proofof}

{\bf Corollary~\ref{cor:linear} (Representational power of GAEs).}
For any (trained) graph auto-encoder in $\mathcal{Z}_{\relu} \supseteq \mathcal{Z}_{\relu, X}$, there exists an equivalent, featureless linear encoder in $\mathcal{Z}_{\lin}$, that can achieve the same training loss: $\mathcal{Z}_{\lin} \supseteq \mathcal{Z}_{\relu}$.\\

\begin{proofof}{Corollary~\ref{cor:linear}}
Let $Z_{\lin}(W) = \tilde{A} \cdot W$ for $W \in \mathbb{R}^{n \times f}$ be the latent representation of the linear model with weights $W$. We can write the latent representation for any GAE as a linear function of the form $Z_{\relu}(W) = \tilde{A} \cdot W$ for some matrix $W \in \mathcal{W}$, where $\mathcal{W}$ is the set of matrices that can be represented by the $\relu$ term in the function: $\mathcal{W} = \{\relu(\tilde{A}XW^{(0)})W^{(1)} : W^{(0)}\in \mathbb{R}^{n \times d}, W^{(1)}\in \mathbb{R}^{d \times f} \} \subseteq \mathbb{R}^{n \times f}$.
We now define the possible latent embeddings that the two models can learn. For the relu encoder we have $\mathcal{Z}_{\relu} = \{Z_{\relu}(W) \text{ for all } W \in \mathcal{W}\}$, and for the linear model we get $\mathcal{Z}_{\lin} = \{Z_{\lin}(W) \text{ for all } W \in \mathbb{R}^{n \times g}\}$.
Since $\mathcal{W} \subseteq \mathbb{R}^{n \times g}$, the set of learnable embeddings of the relu encoder is a true subspace of all learnable embeddings of the linear encoder: $\mathcal{Z}_{\relu} = \{\tilde{A} W : W \in \mathcal{W}\} \subseteq \{\tilde{A} W : W \in \mathbb{R}^{n \times g}\} = \mathcal{Z}_{\lin}$. 
It follows that, given any loss $l$, the linear model can achieve at least as good performance as the relu encoder: 
$\underset{Z_{\relu} \in \mathcal{Z}_{\relu}}{\inf} l(Z_{\relu}) \geq \underset{Z_{\lin} \in \mathcal{Z}_{\lin}}{\inf} l(Z_{\lin})$. 
\end{proofof}

{\bf Consequences.} To better understand the implications of Corollary~\ref{cor:linear}, it is instructive to engage in the following thought experiment. Consider the standard supervised learning setting with a training data matrix $U \in \RR^{n \times d}$ and labels $y \in R$. For any function (say a deep neural network) $f: \RR^{n \times d} \to \RR^n$, it is possible to express $f(U) = U U^+ f(U) = UW$, for some $W \in \RR^{d \times 1}$ if and only if $U$ has a right inverse. This is a mild requirement if $d>n$. Note that this is precisely the ''underdetermined'' or ''high-dimensional'' setting where it is possible to find a linear map that perfectly fits the training data to the function values. If $d < n$, the right inverse cannot exist; therefore, one cannot find such a $W$. Let us contrast this with our setting of GNNs in Corollary~\ref{cor:linear}. Since we consider graph networks, the input is the adjacency matrix $\tilde{A} \in \RR^{n \times n}$; that is, we have feature dimensions equal to sample size $(n=d)$. If $\tilde{A}$ is full rank, one can similarly find a linear map that perfectly fits the outputs of any nonlinear mapping of the input data.
While it may seem surprising at first glance, it is intuitively clear that in high dimensions, one can always find a linear map that perfectly fits the outputs of any nonlinear mapping. If the model is restricted to the input data domain, we can attain the same ``training loss''.
But even if $\tilde{A}$ was not full rank, in the graph auto-encoder setting, the GCN's last layer has linear activation, restricting the solution space in exactly the same way the linear model does.

\newpage

\section{Experimental details}\label{app:setup}

For our empirical evaluation we consider one synthetic dataset and three real world citation networks; \cora, \cseer\ and \pubmed. Table~\ref{tab:statistics} shows the graph statistics for the considered datasets.

\begin{table}[h]
    \centering
    \begin{tabular}{|l|r r r r|}
    \toprule
        dataset & nodes & edges & density & dim. features\\
        \midrule
        $\cora$ &  $2 708$ & $5 278$ & $0.00143$ & $1433$ \\
        $\cseer$ & $3 327$ & $4 614$ & $0.00083$ & $3 703$ \\ 
        $\pubmed$ & $19 717$ & $44 324$ & $0.00023$ & $500$ \\ 
        Synthetic & $\sim 1 000$ & $\sim 10 000 - 20 000$ & $\sim 0.01 - 0.02$ & $64$ \\
        \bottomrule
    \end{tabular}\vspace*{5pt}
    \caption{Graph statistics of the real world datasets and averaged values in the synthetic setting.}
    \label{tab:statistics}
\end{table} 

For every dataset we compute the performances for two different tasks; node and link prediction, and four different models: using the relu encoder and the linear encoder once with features and also without features. So we compute performances in eight different settings for each dataset.

Every graph is split into train, validation and test sets with a ration of $70/10/20$. So training always happens on either $70\%$ of the edges and all nodes for the link prediction task, or on only a set of $70\%$ of the nodes for the node prediction task. Test performance is then evaluated on $90\%$ of the edges / nodes. We do not use the parts of the graph used for validation.

The code for the relu graph auto-encoder is publicly available with an MIT license.

We run the experiments on an internal cluster on {\it Intel XEON CPU E5-2650 v4} and {\it GeForce GTX 1080 Ti}.
All experiments on the synthetic dataset take about 9 hours on single CPU and single GPU. 
Experiments for \cora\  and \cseer\ take about 4 and 5 hours respectively.
For the \pubmed\ dataset, which is the largest one, running all 8 setups took about 4 days on a single CPU and two GPUs.

\begin{table}[thb]
    \centering
    \begin{tabular}{c||c|c||c|c|}
\cmidrule{2-5}
\multicolumn{1}{c|}{} & \multicolumn{4}{c|}{cora} \\
\cmidrule{2-5}
\multicolumn{1}{c|}{} & \multicolumn{2}{c||}{edge} & \multicolumn{2}{c|}{node}\\
\cmidrule{2-5}
\multicolumn{1}{c|}{} & linear & GAE & linear & GAE \\
\midrule
features & $0.91 \pm 0.0088$ & $0.88 \pm 0.0123$ & $0.99 \pm 0.0009$ & $0.99 \pm 0.0014$ \\
\midrule 
featureless & $0.84 \pm 0.0078$ & $0.85 \pm 0.0122$ & $0.98 \pm 0.0027$ & $0.98 \pm 0.0016$ \\
\bottomrule
\end{tabular}
    \begin{tabular}{c||c|c||c|c|}
\cmidrule{2-5}
\multicolumn{1}{c|}{} & \multicolumn{4}{c|}{citeseer} \\
\cmidrule{2-5}
\multicolumn{1}{c|}{} & \multicolumn{2}{c||}{edge} & \multicolumn{2}{c|}{node}\\
\cmidrule{2-5}
\multicolumn{1}{c|}{} & linear & GAE & linear & GAE \\
\midrule
features & $0.92 \pm 0.0087$ & $0.84 \pm 0.0264$ & $0.99 \pm 0.0010$ & $0.99 \pm 0.0012$ \\
\midrule 
featureless & $0.78 \pm 0.0131$ & $0.78 \pm 0.0149$ & $0.98 \pm 0.0015$ & $0.98 \pm 0.0014$ \\
\bottomrule
\end{tabular}
    \begin{tabular}{c||c|c||c|c|}
\cmidrule{2-5}
\multicolumn{1}{c|}{} & \multicolumn{4}{c|}{pubmed} \\
\cmidrule{2-5}
\multicolumn{1}{c|}{} & \multicolumn{2}{c||}{edge} & \multicolumn{2}{c|}{node}\\
\cmidrule{2-5}
\multicolumn{1}{c|}{} & linear & GAE & linear & GAE \\
\midrule
features & $0.96 \pm 0.0018$ & $0.92 \pm 0.0037$ & $0.98 \pm 0.0006$ & $0.97 \pm 0.0055$ \\
\midrule 
featureless & $0.83 \pm 0.0056$ & $0.83 \pm 0.0049$ & $0.99 \pm 0.0009$ & $0.99 \pm 0.0006$ \\
\bottomrule
\end{tabular}
    \vspace{8pt}
    \caption{Test performance in terms of AUC for the three real world datasets \cora~ \cseer~ and \pubmed~ with given standard deviations. We consider relu and linear encoder models with and without features. Top row: Link prediction task, points are embedded into 16 dimensions.
    Bottom row: Node prediction task, points are embedded into 16 dimensions.}
    \label{tab:all}
\end{table}

\section{Additional experiments}\label{app:exp}

For the real world datasets and the link prediction task we show result for embedding into 4 dimensions in the main paper.
Table~\ref{tab:all} show results when embedding into 16 dimensions. All models show near to optimal performance which indicates that embedding into 16 dimensions is barely a restriction and too simple of a task.

\end{document}